\theoremstyle{plain}
\newtheorem{theorem}{Theorem}[section]
\newtheorem{lemma}[theorem]{Lemma}
\title{Fixed Integral Neural Networks}
\author{\textbf{Ryan Kortvelesy} \\
University of Cambridge \\
Cambridge, UK \\
rk627@cam.ac.uk
}
\begin{document}

\maketitle

\section{Introduction}
\label{sec:Introduction}
It is often useful to perform integration over learned functions represented by neural networks \cite{monotonic, bayesian, neuralode}. However, this integration is usually performed numerically, as analytical integration over learned functions (especially neural networks) is generally viewed as intractable. In this work, we present a method for representing the \textit{analytical} integral of a learned function $f$. This allows the exact integral of a neural network to be computed, and enables constrained neural networks to be parametrised by applying constraints directly to the integral. Crucially, we also introduce a method to constrain $f$ to be positive, a necessary condition for many applications (\textit{e.g.} probability distributions, distance metrics, etc). Finally, we introduce several applications where our fixed-integral neural network (FINN) can be utilised.

% \section{Related Work}
% \label{sec:RelatedWork}
% \input{RelatedWork}

\section{Method}
\label{sec:Method}
While it is not tractable to directly compute the integral of a function represented by a neural network, it is straightforward to take the analytical derivative. In this paper, we leverage the fundamental theorem of calculus in order to implicitly learn the integral of a function.

Suppose we wish to learn some function $f: \mathbb{R}^n \mapsto \mathbb{R}$. Instead of directly parametrising $f$, we represent it implicitly by parametrising its indefinite integral $F_\theta$ with a neural network:

\begin{equation}
    F_\theta(\vec{x}) = \int \int \cdots \int f(\vec{x}) \; dx_1 dx_2 \ldots dx_n
\end{equation}

Note that as $f$ is defined implicitly as a function of $F_\theta$, this is not an approximation---it is the \textit{exact} analytical integral. In order to solve for $f$, we must differentiate $F_\theta$:

\begin{equation}
    f(\vec{x}) = \frac{\partial}{\partial x_1} \frac{\partial}{\partial x_2} \cdots \frac{\partial}{\partial x_n} F_\theta(\vec{x})
    \label{eq:f_F}
\end{equation}

Although we parametrise its integral $F_\theta$, the function we wish to learn is $f$. Consequently, during the learning process, the loss is applied directly to $f$:

\begin{equation}
    \mathcal{L} = \mathbb{E}\left[ (y - f(\vec{x}))^2 \right]
\end{equation}

\begin{figure}
    \centering
    \begin{subfigure}[b]{.32\textwidth}
        \includegraphics[width=0.99\textwidth]{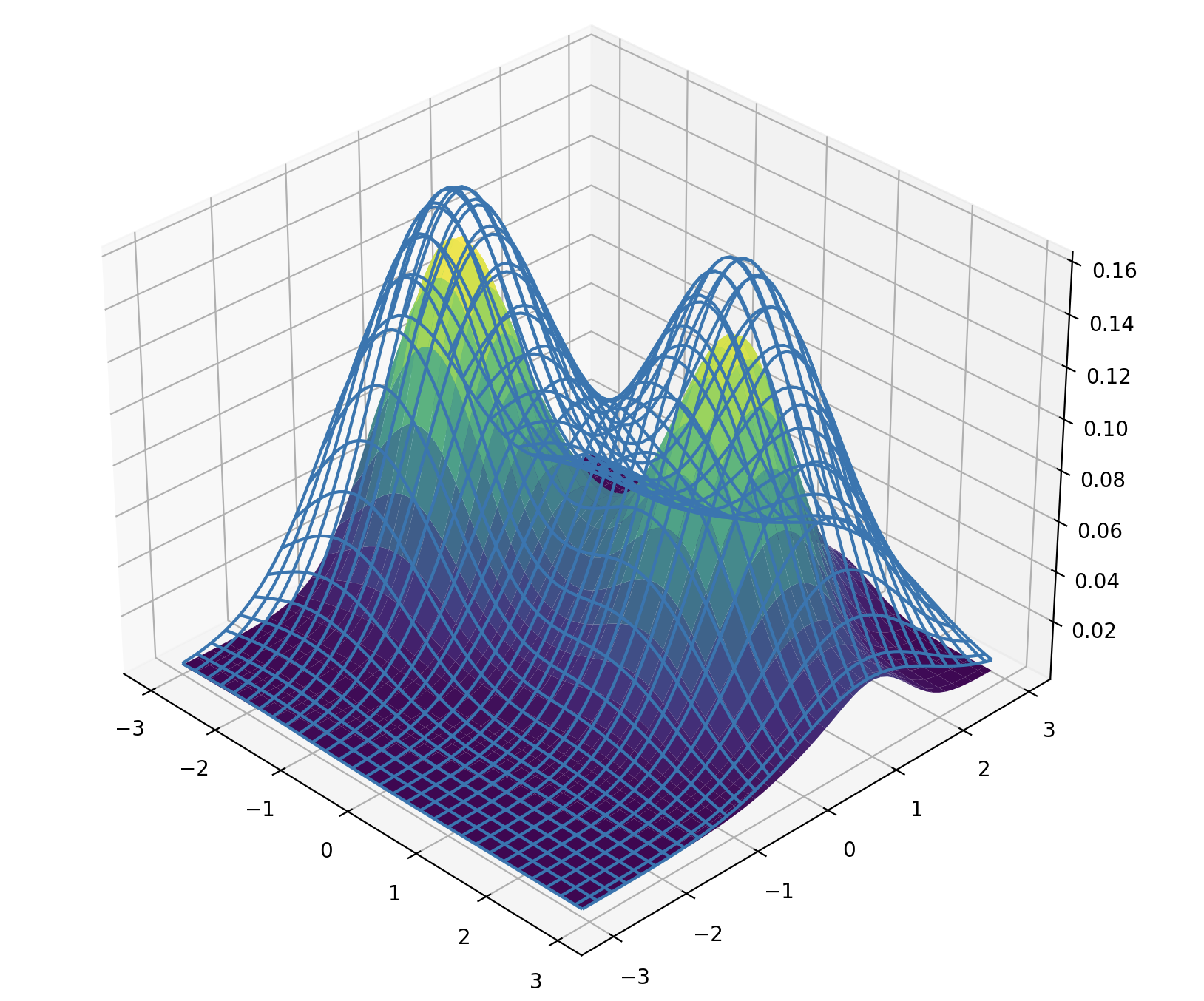}
        \caption{$\epsilon=1$}
    \end{subfigure}
    \begin{subfigure}[b]{.32\textwidth}
        \includegraphics[width=0.99\textwidth]{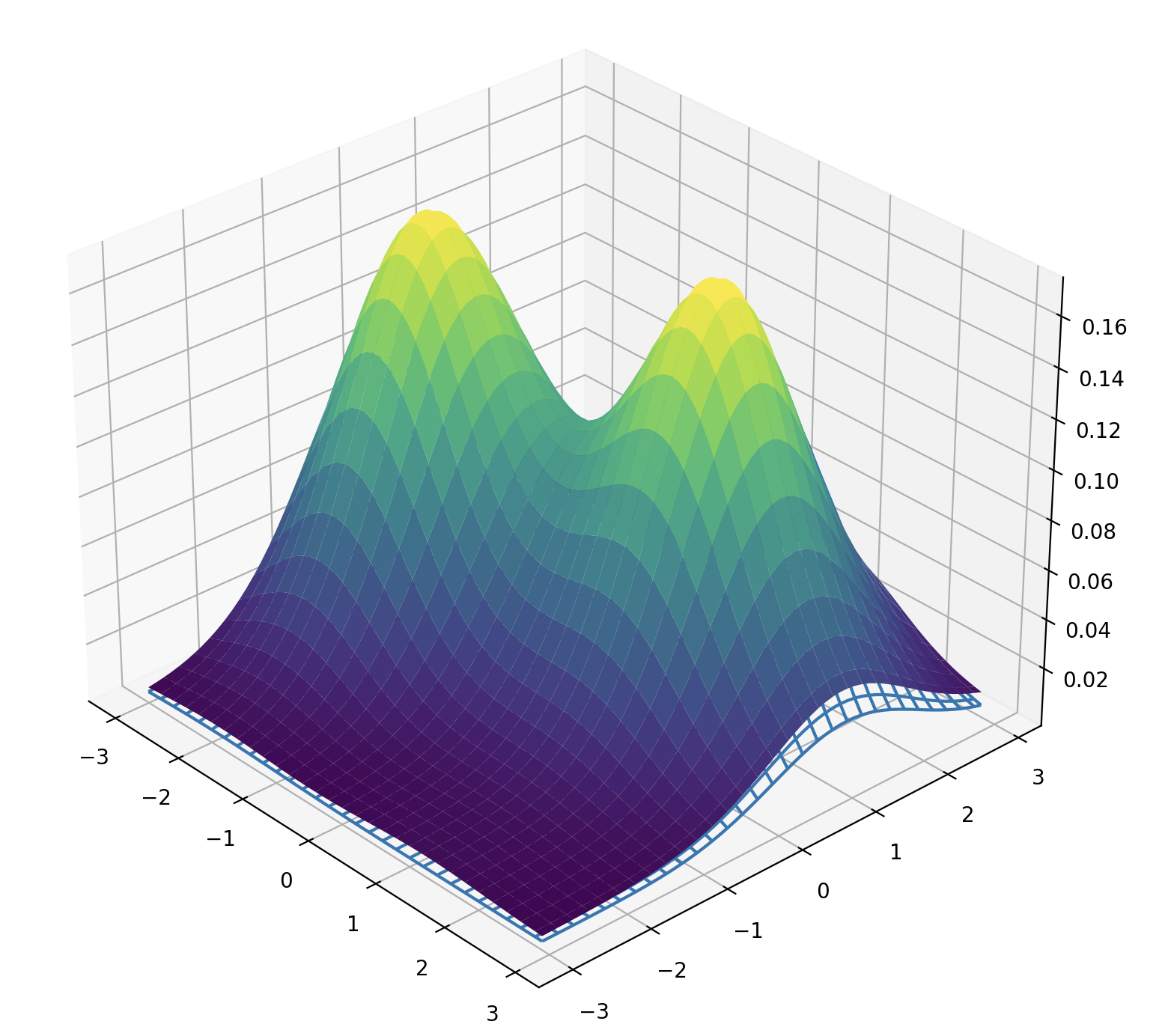}
        \caption{$\epsilon=2$}
    \end{subfigure}
    \begin{subfigure}[b]{.32\textwidth}
        \includegraphics[width=0.99\textwidth]{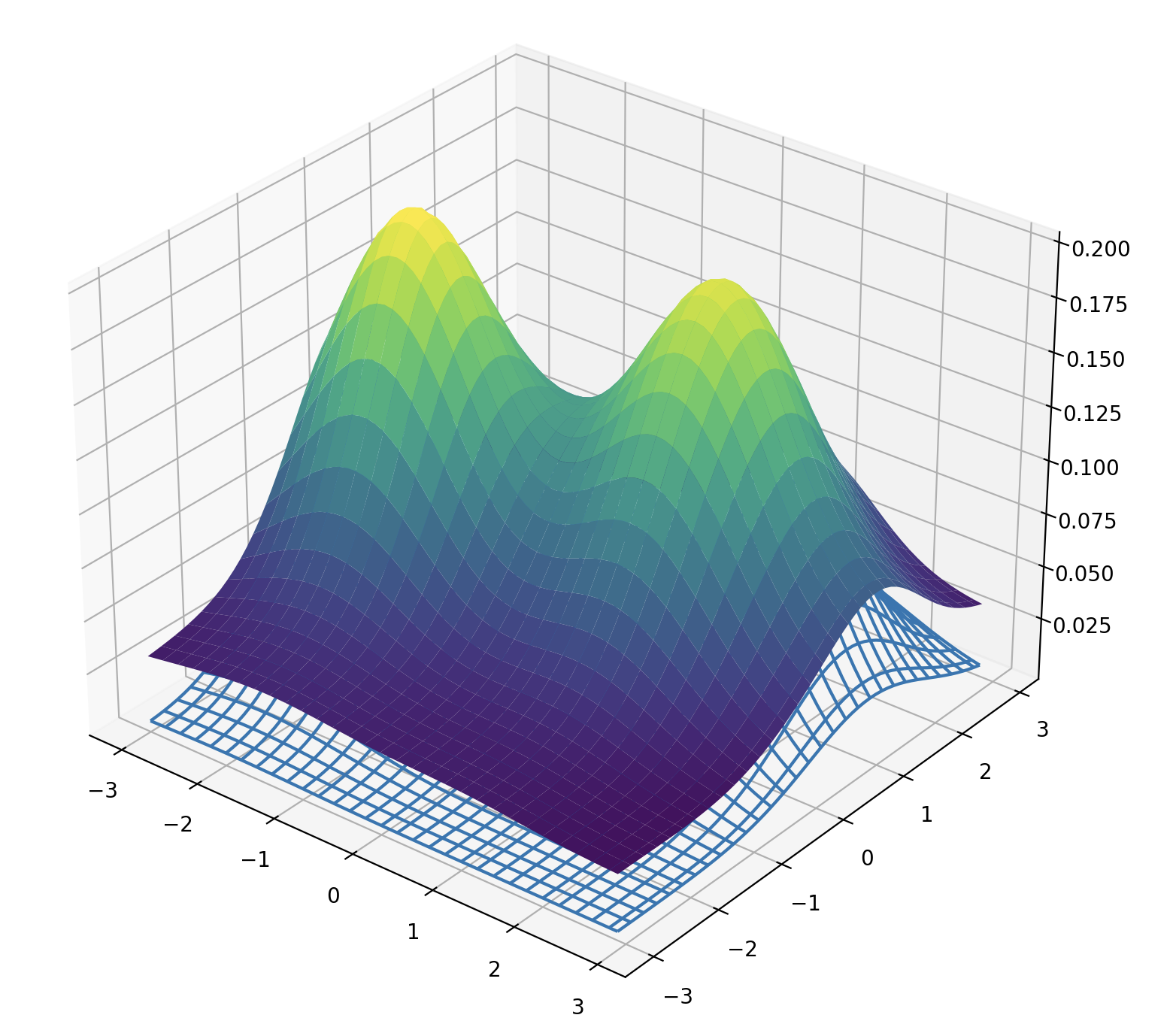}
        \caption{$\epsilon=3$}
    \end{subfigure}
    \caption{In this experiment, we create a Gaussian Mixture Model $g(x_1,x_2)$ with two modes and a total volume of $\int_{-3}^{3} \int_{-3}^{3} g(x_1,x_2) dx_1 dx_2 = 2$ as a ground truth (shown as a wireframe). We train three instances of FINN $f(x_1,x_2)$ to approximate the ground truth, with an integral constraint of $\epsilon=1$, $\epsilon=2$, and $\epsilon=3$ (less than, equal to, and greater than the volume of the ground truth). Note that although it is impossible for the constrained function under $\epsilon=1$ and $\epsilon=3$ to represent the ground truth, it converges to an approximation which minimises the error. Furthermore, note that this instance of FINN also enforces the positivity constraint, which restricts the function to positive values while still allowing negative derivatives.}
    \label{fig:training}
\end{figure}

\subsection{Integral Constraints}

Since $f$ is defined as a function of $F_\theta$, we can apply constraints directly to its integral. For example, by using an equality constraint, we can define the class of functions $f$ that integrate to a given value $\epsilon$ over a given domain $\mathcal{D}$. The same principle can be utilised to apply inequality constraints or transformations.

We start by considering rectangular domains defined by intervals $[a_i,b_i]$ for $i \in [1..n]$. The definite integral of $f$ over rectangular domain $\mathcal{D}$ is defined:

\begin{equation}
    F_\theta \Big\vert_\mathcal{D} = \sum_{p_1 \in [a_1, b_1]} \sum_{p_2 \in [a_2,b_2]} \cdots \sum_{p_n \in [a_n,b_n]} (-1)^{^{\sum \mathbbm{1}(p_i=a_i)}} \cdot F_\theta(\langle p_1, p_2, \ldots, p_n \rangle)
    \label{eq:int_eval}
\end{equation}

That is, in order to calculate the definite integral over an $n$-dimensional box, we must evaluate all of its vertices. This is because evaluating $F_\theta$ at a point will yield the ``area'' from negative infinity up to that point, so multiple points must be evaluated to determine the area of a finite region. The $(-1)$ exponent in the equation determines which regions must be subtracted and which must be added in order to determine that area (Figure \ref{fig:int_eval}).

\begin{figure}[h]
    \centering
    \includegraphics[width=0.4\textwidth]{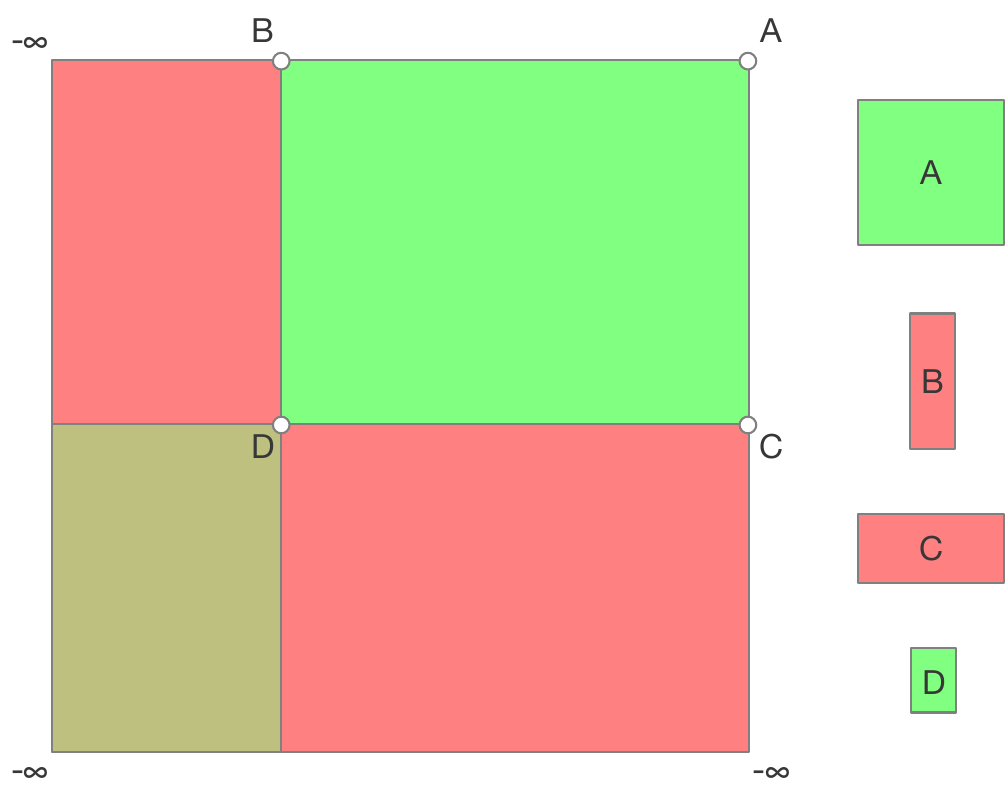}
    \caption{In order to determine the area bounded by points A and D, we must evaluate: $F(A)-F(B)-F(C)+F(D)$. This is because $F(A)$ returns the area from negative infinity up to point A, so $F(B)$ and $F(C)$ must be subtracted to confine the relevant region. However, as this subtracts the region up to point $D$ twice, $F(D)$ must be added to rectify the result. Equation $\eqref{eq:int_eval}$ generalises this formula to any dimension.}
    \label{fig:int_eval}
\end{figure}

In order to parametrise the class of functions which integrate to $\epsilon$, we start by defining $F'_\theta$, the integral of some unconstrained function $f'$. Then, we define the integral $F_\theta$ of our constrained function $f$ by rescaling $F'_\theta$:

\begin{equation}
    F_\theta(\vec{x}) = \frac{\epsilon}{F'_\theta \big\vert_\mathcal{D}} F'_\theta(\vec{x})
\end{equation}

Since the term $\frac{\epsilon}{F'_\theta \big\vert_\mathcal{D}}$ is a scalar, we can move it inside the integral:

\begin{align}
    F_\theta(\vec{x}) &= \frac{\epsilon}{F'_\theta \big\vert_\mathcal{D}} F'_\theta(\vec{x}) \\
    F_\theta(\vec{x}) &= \frac{\epsilon}{F'_\theta \big\vert_\mathcal{D}} \int \int \cdots \int f'(\vec{x}) \; dx_1 dx_2 \ldots dx_n \\
    F_\theta(\vec{x}) &= \int \int \cdots \int \frac{\epsilon}{F'_\theta \big\vert_\mathcal{D}} f'(\vec{x}) \; dx_1 dx_2 \ldots dx_n \\
\end{align}

Therefore, we can write the constrained $f$ as a function of the unconstrained integral $F'_\theta$, which carries the learnable parameters:

\begin{align}
    f(\vec{x}) &= \frac{\epsilon}{F'_\theta \big\vert_\mathcal{D}} f'(\vec{x}) \\
    f(\vec{x}) &= \frac{\epsilon}{F'_\theta \big\vert_\mathcal{D}} \cdot \frac{\partial}{\partial x_1} \frac{\partial}{\partial x_2} \cdots \frac{\partial}{\partial x_n} F'_\theta(\vec{x})
\end{align}

\subsection{Positivity Constraint}

In many applications of FINN, it is necessary to constrain $f$ to be non-negative. This is useful in cases where $f$ is only defined in the positive domain (see \autoref{sec:Applications}).

Following from \autoref{eq:f_F}, in order to apply this constraint, we must ensure that the mixed partial of $F_\theta$ is non-negative. To do this, we define a new neural network layer to construct our multi-layer perceptron (MLP):

\begin{equation}
    \sigma_n \left( \lvert W \! \rvert \, \vec{x} + b \right)
\end{equation}

In this layer, we apply an absolute value to the weights (but not the bias), and we use a custom activation function $\sigma_n$, which is conditioned on the dimension of the input. Note that although we wish the mixed partial of $F_\theta$ to be non-negative, it is too constraining to restrict further derivatives to be non-negative as well. The derivative of our function $\dot{f}$ (with respect to any input dimension) should be able to represent positive \textit{or} negative values. To satisfy these criteria, we define the following activation function using the error function $\mathrm{erf}(x) = \frac{2}{\sqrt{\pi}} \int_0^x e^{-t^2} dt$:

\begin{equation}
    \sigma_n = \underbrace{\int \int \cdots \int}_{n-1} \frac{\mathrm{erf}(x)+1}{2} \; \underbrace{dx \cdots \, dx \, dx}_{n-1}
\end{equation}

For $n=1$, this simplifies to $\frac{\mathrm{erf}(x)+1}{2}$, which closely resembles sigmoid. For $n=2$, it resembles softplus, which is the integral of sigmoid. While they are similar, it is crucial that we use this custom activation instead of sigmoid, because the higher-order integrals of sigmoid evaluate to the polylogarithm, which has no closed-form solution. Conversely, all of the integrals of the error function have analytical solutions in terms of linear compositions of constants, power functions $x^k$, exponentials $e^{-x^2}$, and the error function itself $\mathrm{erf}(x)$ (all of which have efficient implementations). In practice, we use symbolic math to compute the integral once at initialisation time, and then each forward pass evaluates the resulting expression.

\section{Applications}
\label{sec:Applications}
There are many possible applications for FINN. In this section, we highlight four use cases, including 1) distance metrics between functions, 2) trajectory optimisation, 3) probability distributions, and 4) optimal bellman updates in reinforcement learning.

\subsection{Distance Metrics}

\begin{figure}
    \centering
    \includegraphics[width=0.4\textwidth]{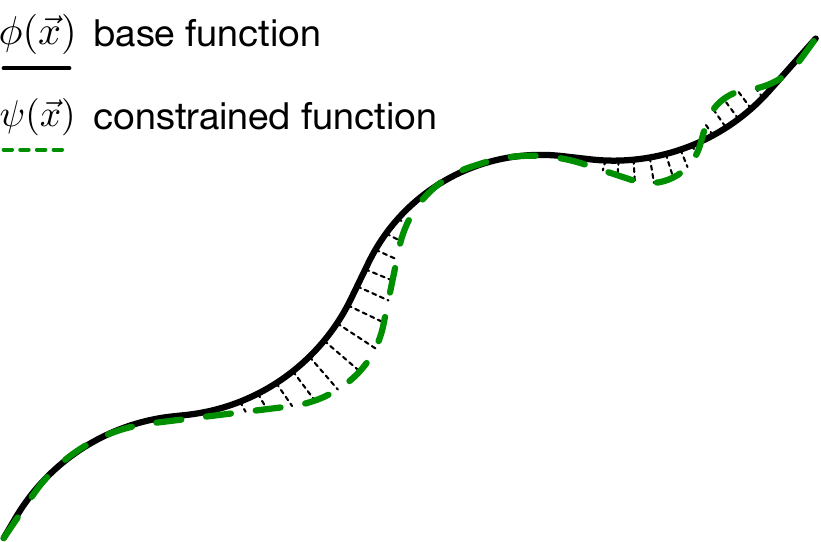}
    \caption{In the \textit{Distance Metrics} application, we can use FINN to not only calculate, but constrain an integral-based metric between two functions. In this diagram, we show a base function $\phi$ and a learned function $\psi$ which is constrained such that $\int_\mathcal{D} d(\psi(\vec{x}), \phi(\vec{x})) \; d\vec{x} = \epsilon$. Note that although we visualise in two dimensions, this method is valid for any number of dimensions.}
    \label{fig:metric}
\end{figure}

Given a base function $\phi(x): \mathbb{R}^n \mapsto \mathbb{R}^m$, consider the task of defining a new function $\psi(x): \mathbb{R}^n \mapsto \mathbb{R}^m$ with bounded deviation from $\phi$ (\autoref{fig:metric}). While there are several methods to quantify similarity between functions, we select the ``integral metric'', which takes the integral of a distance metric $d : \mathbb{R}^{2n} \to \mathbb{R}$ between the two functions over some domain:

\begin{equation}
    J(\psi, \phi) = \int d(\phi(\vec{x}), \psi(\vec{x})) \; d\vec{x}
\end{equation}

In this case, we select $d(a,b) = ||a-b||$. If we constrain $\psi$ to represent the class of functions such that $J(\phi,\psi) \leq \epsilon$, then it can only place a finite amount of mass across the domain. The constrained function $\psi$ can be implemented by adding a direction function $\nu : \mathbb{R}^n \to \mathbb{R}^m$ multiplied by a state-dependent scaling factor $f : \mathbb{R}^n \to \mathbb{R}$ defined by FINN to the base function $\phi$:

\begin{equation}
    \psi(\vec{x}) = \phi(\vec{x}) + \frac{f(\vec{x})}{\lvert\lvert \nu(\vec{x}) \rvert\rvert} \cdot \nu(\vec{x})
\end{equation}

Then, if we select a our fixed integral constant to be $\int f(\vec{x}) d\vec{x} \leq \epsilon$ (using an inequality constraint, and enforcing positivity of $f$), we can show that $J(\phi, \psi) \leq \epsilon$:

\begin{align}
    J(\phi, \psi) &= \int \Big\lvert\Big\lvert \phi(\vec{x}) - (\phi(\vec{x}) + \frac{f(\vec{x})}{\lvert\lvert \nu(\vec{x}) \rvert\rvert} \cdot \nu(\vec{x})) \Big\rvert\Big\rvert \; d\vec{x} \\
     &= \int \Big\lvert\Big\lvert -f(\vec{x}) \cdot \frac{\nu(\vec{x})}{\lvert\lvert \nu(\vec{x})) \rvert\rvert} \Big\rvert\Big\rvert \; d\vec{x} \\
     &= \int f(\vec{x}) \cdot \frac{\lvert\lvert \nu(\vec{x})) \rvert\rvert}{\lvert\lvert \nu(\vec{x})) \rvert\rvert} \; d\vec{x} \\
     &= \int f(\vec{x}) \; d\vec{x} \\
     &\leq \epsilon \\
\end{align}

There are many possible applications for this---for example, it could be used to add a bounded (and controllable) amount of representational complexity on top of an explainable or ground truth model. Such an architecture could mitigate issues with overfitting, forcing a model to learn a representation which distributes its mass in the most efficient manner.

\subsection{Trajectory Optimisation}

\begin{figure}[h]
    \centering
    \includegraphics[width=0.5\textwidth]{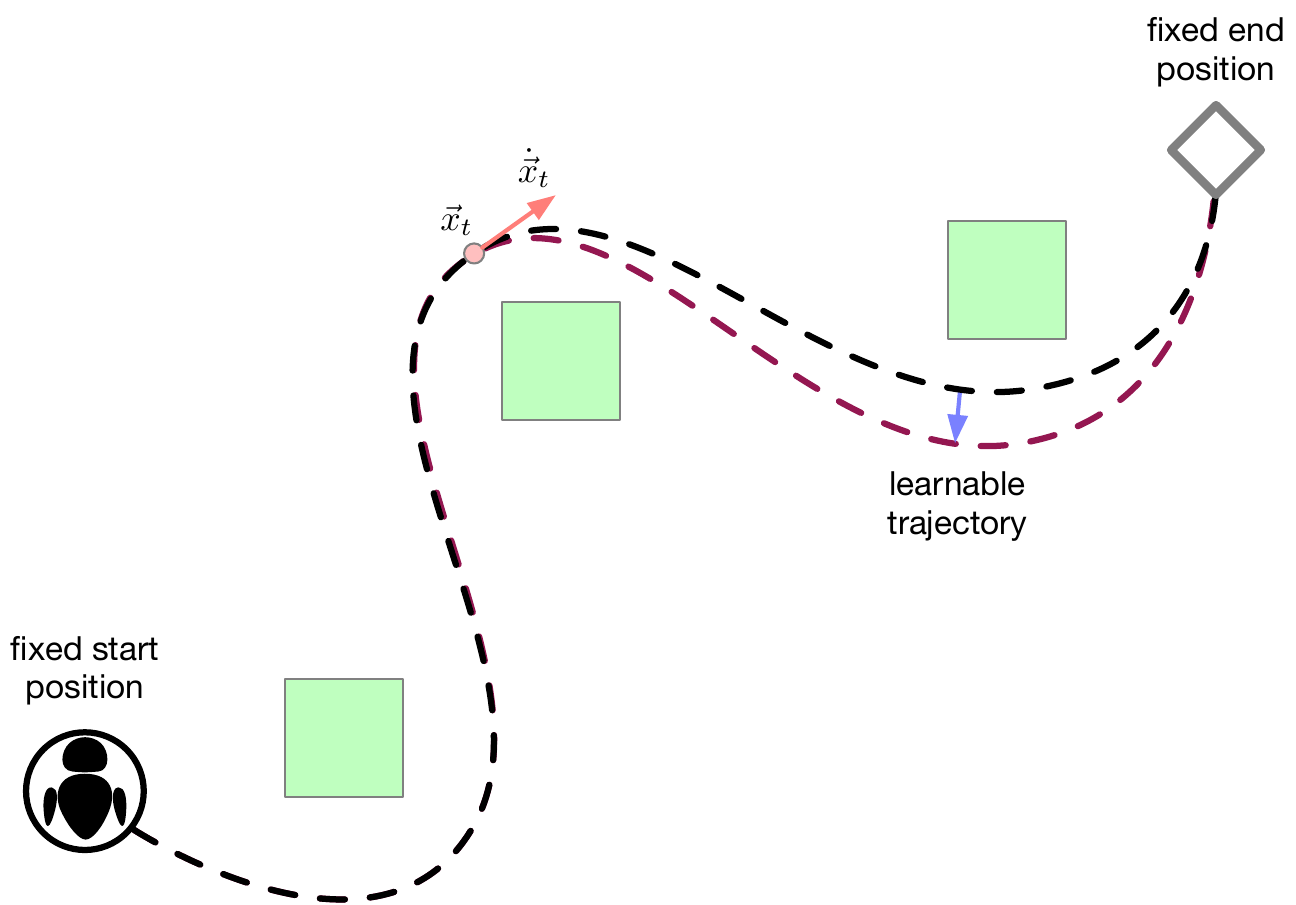}
    \caption{In the \textit{Trajectory Optimisation} application, we demonstrate how FINN can be used to parameterise a trajectory which is constrained to start and end in given positions. This trajectory can then be optimised to satisfy dynamics or collision constraints (treated as objectives). Either state $\vec{x}$ or its time derivative $\dot{\vec{x}}$ can be queried from the resulting open-loop trajectory.}
    \label{fig:traj}
\end{figure}

Given some physical system in state $x_0$ suppose we wish to define a trajectory $\dot{x}(t)$ to bring the system to state $x_T$. Furthermore, suppose we have access to an inverse dynamics model $u = T(x, \dot{x})$.

If we define $\vec{F}_\theta(t) = \langle F_{\theta_1}(t), F_{\theta_2}(t), F_{\theta_3}(t) \rangle: \mathbb{R} \mapsto \mathbb{R}^3$ with three instances of FINN, then we can parametrise the class of trajectories which end in the desired end state $x_T$. To do this, we set the integral equality constraint $\epsilon$ for each $F_{\theta_i}$ to the corresponding dimension of the terminal state $\langle \epsilon_1, \epsilon_2, \epsilon_3 \rangle = x_T - x_0$. Note that in this application we \textit{do not} apply the positivity constraint, as it is permissible for $\dot{x}(t)$ to assume negative values.

With this parametrisation, we get a trajectory $x(t) = \vec{F}_\theta(t) - \vec{F}_\theta(t_0) + x_0$ and its corresponding first order derivative $\dot{x}(t) = f(t)$ which are constrained to start at $x_0$ and end at $x_T$. Using the inverse dynamics model, an open-loop control trajectory can be defined $u(t) = T(x(t), \dot{x}(t))$.

During the optimisation process the trajectory can be updated, but it will always be constrained to produce $\dot{x}(t)$ which passes through $x_0$ and $x_T$. Since the representation of $x(t)$ and $\dot{x}(t)$ share the same parameters, it is possible to apply loss to either $x(t)$ \textit{or} $\dot{x}(t)$. For example, the derivative $\dot{x}(t)$ can be optimised to satisfy dynamics constraints or minimise energy usage. Alternatively, the trajectory itself $x(t)$ can be optimised to avoid obstacles.

\subsection{Probability Distributions}

\begin{figure}[h]
    \centering
    \includegraphics[width=0.5\textwidth]{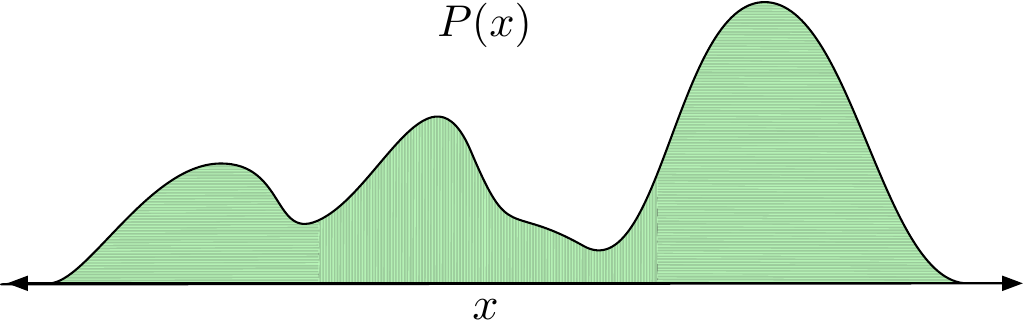}
    \caption{In the \textit{Probability Distributions} application, we use FINN to model arbitrary continuous probability distributions. Note that while we visualise our distribution in one dimension, our method works for any number of dimensions.}
    \label{fig:prob}
\end{figure}

As FINN can parametrise the class of positive functions which integrate to $1$, it can be used to represent arbitrary probability distributions. In this context, FINN can be viewed as an alternative to Normalising Flows \cite{normflows}.

The implementation of a probability density function $p: \mathbb{R}^n \mapsto \mathbb{R}$ is fairly straighforward---if we impose the positivity constraint and an integral equality constraint with $\epsilon=1$ over some domain $\mathcal{D}$, then $p(\vec{x}) = f(\vec{x})$. By extension, the integral $F_\theta$ represents the cumulative density function of $p$.

While evaluating the PDF and CDF of the learned probability distribution is straightforward, sampling from the distribution presents a challenge. The standard method for sampling from a multivariate distribution involves sampling from a uniform distribution $t \sim \mathcal{U}[0,1]$, using the inverse CDF to produce a sample in a single dimension, and then repeating the process in each other dimension using conditional probability distributions. Unfortunately, we do not have a representation of the inverse CDF. However, the learned CDF represented by $F_\theta$ is monotonic, so it is straighforward to compute the inverse with binary search or Newton's method.

\subsection{Bellman Optimality Operator}

\begin{figure}[h]
    \centering
    \includegraphics[width=0.6\textwidth]{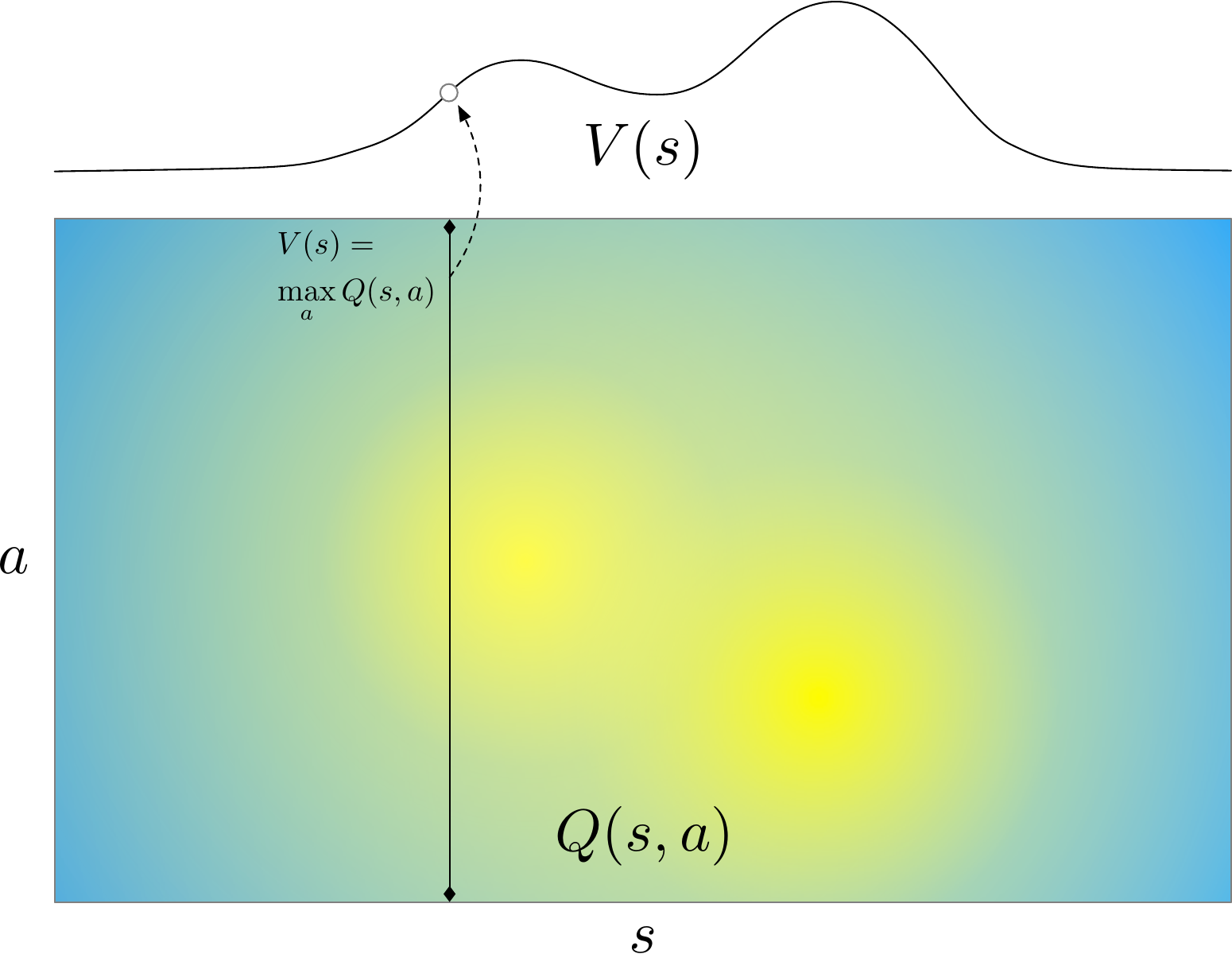}
    \caption{In the \textit{Bellman Optimality Operator} application, we use FINN in the context of reinforcement learning to calculate optimal value and action-value functions. Usually, in the continuous domain, it is not possible to take the maximum over all actions. However, we can use FINN to integrate over actions in the continuous domain using the \textit{soft bellman operator}.}
    \label{fig:rl}
\end{figure}

In reinforcement learning, the Bellman optimality equation defines the value of a given state as the expected discounted sum of future rewards that would be accrued by an optimal policy, starting at that state. The Bellman operator is a dynamic programming update rule which is applied to a value function such that it approaches the optimal value function:

\begin{align}
    \mathcal{B}[Q(s,a)] &= \mathbb{E} \left[ 
r(s,a,s') + \gamma \max_a Q(s',a) \right]
\end{align}

Note that in this equation we must compute the maximum over all actions to compute the value for the next state $V(s') = \max_a Q(s',a)$. While this is possible in discrete settings, it is generally intractable in continuous spaces. Consequently, most approaches sacrifice some optimality for computational feasibility, opting for an on-policy update (where $\pi$ is the current policy):

\begin{align}
    \mathcal{B}[Q(s,a)] &= \mathbb{E} \left[ 
r(s,a,s') + \gamma \, Q(s',\pi(s)) \right]
\end{align}

However, while the first update rule converges to the optimal value function regardless of which actions are chosen, the on-policy approach is biased by the current policy. If the policy takes suboptimal actions, then the value function will be biased towards the values associated with those actions.

However, FINN enables the use of the optimal Bellman operator in continuous space, as it allows us to analytically integrate over actions. Consequently, it converges towards the optimal Q-values regardless of which policy is used.

As a preliminary, note that the $\max$ can be represented with $\mathrm{log\text{-}sum\text{-}exp}$ using temperature parameter $\beta$. This gives rise to the \textit{soft bellman operator} \cite{softbellman}:

\begin{align}
    \mathcal{B}[Q(s,a)] &= \mathbb{E} \left[ 
r(s,a,s') + \gamma \frac{1}{\beta}\log\left( \int 
e^{\beta Q(s,a)} da \right) \right]
\end{align}

If we select $f(s,a) = e^{\beta Q(s',a)}$ and integrate over the actions, we get $F_\theta(s,a) = \int 
e^{\beta Q(s',a)} da$. So, we can represent the value functions as:

\begin{align}
    Q(s,a) &= \frac{1}{\beta} \log(f(s,a)) \\
    V(s) &= \frac{1}{\beta} \log(F_\theta(s,a))
\end{align}

Note that we do not apply an integral constraint, but we do apply a positivity constraint to ensure that $e^{\beta Q(s',a)}$ is positive (and therefore the $\log$ is a real number). However, since we only wish $f$ to be monotonic with respect to the actions $a$ (not the state $s$), we cannot use the standard MLP that we defined in \autoref{sec:Method}. Instead, we use a separate hypernetwork conditioned on $s$ to generate the parameters of our monotonic MLP, allowing $f$ to be non-monotonic with respect to $s$.

\section{Conclusion}
\label{sec:Conclusion}
In this paper we introduced FINN, a method for computing the analytical integral of a learned neural network. Our approach defines several constraints and extensions crucial to potential applications, including 1) an integral constraint, which restricts $f$ to the set of functions which integrate to a given value, 2) a positivity constraint, which restricts $f$ to be positive, and 3) an extension to aribtrary domains, which allows integration over regions which cannot be represented by an $n$-dimensional box. To demonstrate the use cases of FINN, we also present four applications: distance metrics, trajectory optimisation, probability distributions, and bellman optimality operators.

\newpage
\bibliographystyle{unsrt}
\bibliography{ref}

\newpage
\appendix
\section{Appendix}
\subsection{Positivity Proof}

To constrain $f$ to be non-negative, we construct an MLP with a composition of $L$ custom layers $\psi_i$:

\begin{align}
    \label{eq:F_custom}
    F_\theta(\vec{x}) &= \psi_L \left(\ldots \psi_2\left(\psi_1(\vec{x})\right)\right) \\
    \psi_i &= \sigma_n \left( \lvert W_i \! \rvert \, \vec{x} + b_i \right)
    \label{eq:psi_custom}
\end{align}

The activation function $\sigma_n$ denotes a custom nonlinearity defined as a function of $n$, the dimension of the input $\vec{x} \in \mathbb{R}^n$:

\begin{equation}
    \sigma_n = \underbrace{\int \int \cdots \int}_{n-1} \frac{\mathrm{erf}(x)+1}{2} \; \underbrace{dx \cdots \, dx \, dx}_{n-1}
    \label{eq:sigma_n}
\end{equation}

Since $f$ is defined implicitly through our parametrisation of $F_\theta$, we must show that the mixed partial of $F_\theta$ is positive to ensure that $f$ is non-negative. From \autoref{eq:f_F}, we can write:

\begin{align}
    \frac{\partial}{\partial x_1} \frac{\partial}{\partial x_2} \cdots \frac{\partial}{\partial x_n} F_\theta(\vec{x}) \geq 0 &\implies f(\vec{x}) \geq 0
\end{align}

\vspace{5mm}

\begin{lemma}
    \label{lemma:multilayer}
    If the derivative of each layer $\psi_j$ with respect to some $x_i$ is non-negative, then the derivative of $F_\theta$ is non-negative: $\forall j, \frac{\partial}{\partial x_i} \psi_j(\vec{x}) \geq 0 \implies \frac{\partial}{\partial x_i} F_\theta(\vec{x}) \geq 0$.
\end{lemma}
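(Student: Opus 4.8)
The plan is to prove, by induction on the number of layers $L$, the natural generalization of the claim to all partial sub-networks: the derivative of the first-$k$-layer sub-network with respect to $x_i$ is entrywise non-negative for every $k$. The engine is the multivariate chain rule together with the elementary fact that sums and products of non-negative quantities are non-negative, so that no sign cancellation can occur.

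Concretely, I would fix the input coordinate $x_i$ and set up notation: write $G_0(\vec{x}) = \vec{x}$ and $G_k = \psi_k \circ G_{k-1}$ for $k = 1, \dots, L$, so that $G_k : \mathbb{R}^n \to \mathbb{R}^{d_k}$ is the sub-network formed by the first $k$ layers and $G_L = F_\theta$ (with $d_L = 1$). The induction hypothesis is that $\frac{\partial}{\partial x_i} G_k(\vec{x}) \geq 0$ holds entrywise; the lemma is the case $k = L$. The base case $k = 1$ is immediate, since $\frac{\partial}{\partial x_i} G_1(\vec{x}) = \frac{\partial}{\partial x_i} \psi_1(\vec{x}) \geq 0$ is exactly the $j = 1$ hypothesis. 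For the inductive step I would peel off the last layer and apply the chain rule,
\[
    \frac{\partial}{\partial x_i} G_k(\vec{x}) \;=\; J_{\psi_k}\!\bigl(G_{k-1}(\vec{x})\bigr)\,\frac{\partial}{\partial x_i} G_{k-1}(\vec{x}),
\]
where $J_{\psi_k}$ is the Jacobian of the layer map $\psi_k$. The inductive hypothesis supplies $\frac{\partial}{\partial x_i} G_{k-1}(\vec{x}) \geq 0$ entrywise, and the per-layer hypothesis $\frac{\partial}{\partial x_i}\psi_k(\vec{x}) \geq 0$ — read as the assertion that each layer is non-decreasing in every one of its input coordinates — makes $J_{\psi_k}$ an entrywise non-negative matrix; a non-negative matrix applied to a non-negative vector is non-negative, which closes the induction and yields $\frac{\partial}{\partial x_i} F_\theta(\vec{x}) \geq 0$.

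The argument itself is routine, so the main thing that needs care is the precise reading of the hypothesis. The chain-rule product above is manifestly non-negative only if \emph{every} intermediate Jacobian $J_{\psi_k}$ is non-negative in \emph{all} of its entries, so ``$\frac{\partial}{\partial x_i}\psi_j(\vec{x}) \geq 0$'' must be understood as quantified over each layer's own input coordinates — whose number varies from layer to layer — rather than over a single shared index $i$; one also has to observe that this family of per-layer conditions is inherited by each sub-network $G_{k-1}$, so that the induction is legitimate. It is worth noting, though it lies outside the lemma itself, that the structural reason the per-layer hypothesis can actually be enforced is the absolute value applied to the weights in \autoref{eq:psi_custom} combined with the monotonicity of $\sigma_n$; this lemma is then one ingredient of the broader argument that $f \geq 0$.
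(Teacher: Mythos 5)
Your proof is correct and takes essentially the same route as the paper's: decompose the derivative of $F_\theta$ via the chain rule into per-layer factors and observe that a product (here, composition of non-negative Jacobians acting on a non-negative vector) of non-negative quantities is non-negative. Your rendering is in fact slightly more careful than the paper's one-line chain-rule product, since the explicit induction over sub-networks and the reading of the hypothesis as entrywise non-negativity of each layer's Jacobian properly handle the vector-valued intermediate layers.
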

\begin{proof}
    Using the chain rule, we see that the partial derivative of $F_\theta$ is the product of the partials of each layer $\psi_j$, evaluated at the output of the preceding layer:

    \begin{align}
        \frac{\partial}{\partial x_i} \psi_L \left(\ldots \psi_2\left(\psi_1(\vec{x})\right)\right) = \; &\frac{\partial \psi_L}{\partial x_i} \left(\ldots \psi_2\left(\psi_1(\vec{x})\right)\right) \\
        &\;\;\;\;\;\;\;\;\;\;\vdots \nonumber \\
        & \cdot \frac{\partial \psi_2}{\partial x_i}(\psi_1(\vec{x})) \nonumber \\
        & \cdot \frac{\partial \psi_1}{\partial x_i}(\vec{x}) \nonumber
    \end{align}

    So, if the derivative of each layer $\frac{\partial \psi_j}{\partial x_i}$ is positive, then the derivative of the entire network $\frac{\partial}{\partial x_i} F_\theta$ is positive.
\end{proof}

\begin{theorem}
    Under the parametrisation of $F_\theta$ in \autoref{eq:F_custom}, the function $f$ is non-negative.
\end{theorem}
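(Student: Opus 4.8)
The target is to show that the top-order mixed partial $\partial_{x_1}\partial_{x_2}\cdots\partial_{x_n}F_\theta(\vec x)$ is non-negative everywhere; by \autoref{eq:f_F} this is exactly the claim $f\ge 0$. Note that Lemma~\ref{lemma:multilayer} alone is not enough: it controls a single first-order partial $\partial_{x_i}F_\theta$, but after differentiating in a second variable the intermediate expression is no longer a composition of the layers $\psi_j$, so one cannot simply iterate it. The plan is instead to strengthen the inductive statement so that it propagates \emph{all} mixed partials through the network, fed by a sign property of the activation. There are three ingredients: (i) $\sigma_n^{(m)}\ge 0$ for $1\le m\le n$; (ii) a Fa\`a di Bruno bookkeeping step for composition with $\sigma_n$; (iii) an induction over layers.

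For ingredient (i) I would prove: for every $m\in\{1,\dots,n\}$ and $t\in\mathbb{R}$, $\sigma_n^{(m)}(t)\ge 0$. The two highest orders are immediate, $\sigma_n^{(n)}(t)=\tfrac{1}{\sqrt\pi}e^{-t^2}>0$ and $\sigma_n^{(n-1)}(t)=\tfrac{\mathrm{erf}(t)+1}{2}\in(0,1)$. For $m\le n-2$, $\sigma_n^{(m)}$ is an antiderivative of $\sigma_n^{(m+1)}$; choosing the antiderivative that vanishes as $t\to-\infty$ — equivalently $\sigma_n^{(m)}(t)=\int_{-\infty}^{t}\sigma_n^{(m+1)}(s)\,ds$, which is legitimate because every $\sigma_n^{(j)}$ with $j\le n-1$ decays faster than any polynomial times $e^{-t^2}$ as $t\to-\infty$ and is therefore absolutely integrable there — makes $\sigma_n^{(m)}$ the integral of a non-negative function, so a downward induction on $m$ closes this step. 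I expect this to be the main obstacle, precisely because of the choice of integration constants: positivity of the top two orders is automatic, but for lower orders it genuinely requires normalising each antiderivative to vanish at $-\infty$, and a plain symbolic antiderivative (zero constant) need not, so for larger $n$ the bare implementation can violate $\sigma_n^{(m)}\ge 0$ and hence the theorem. Making this normalisation precise, and verifying the decay/integrability claims behind the $\int_{-\infty}^{t}$ representation, is where the real work lies.

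For ingredient (ii): given a scalar $g:\mathbb{R}^n\to\mathbb{R}$ and non-empty $S\subseteq\{1,\dots,n\}$, the multivariate Fa\`a di Bruno formula gives $\partial_S(\sigma_n\circ g)=\sum_{\pi}\sigma_n^{(|\pi|)}(g)\prod_{B\in\pi}\partial_B g$, the sum over partitions $\pi$ of $S$ into non-empty blocks, with positive integer multiplicities. Each $|\pi|$ lies in $\{1,\dots,|S|\}\subseteq\{1,\dots,n\}$, so by (i) every factor $\sigma_n^{(|\pi|)}(g)\ge 0$; hence if $\partial_B g\ge 0$ for all non-empty $B$, then $\partial_S(\sigma_n\circ g)\ge 0$.

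For ingredient (iii), write $h_0=\vec x$ and $h_k=\psi_k(h_{k-1})$ and claim: for every layer $k$, output coordinate $j$, and non-empty $S\subseteq\{1,\dots,n\}$, $\partial_S(h_k)_j\ge 0$. The base case $k=0$ holds since $(h_0)_j=x_j$, so $\partial_{x_i}(h_0)_j=\mathbbm{1}(i=j)\ge 0$ with all higher partials zero. For the step, $(h_k)_j=\sigma_n(z_j)$ with $z_j=\sum_l|W_k|_{jl}(h_{k-1})_l+(b_k)_j$; since $|W_k|_{jl}\ge 0$ and, by the inductive hypothesis, $\partial_B(h_{k-1})_l\ge 0$ for every non-empty $B$, linearity gives $\partial_B z_j\ge 0$ for every non-empty $B$, and ingredient (ii) then yields $\partial_S(h_k)_j=\partial_S(\sigma_n\circ z_j)\ge 0$. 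Taking $k=L$ (where $h_L=F_\theta$ is scalar, so $j$ is forced) and $S=\{1,\dots,n\}$ gives $\partial_{x_1}\cdots\partial_{x_n}F_\theta\ge 0$, i.e. $f\ge 0$. The one point of care is that the induction hypothesis must range over \emph{all} non-empty subsets $S$, not just the full one, because the blocks $B$ entering at a later layer are generally proper subsets.
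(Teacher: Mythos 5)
Your proof is correct, but it takes a genuinely different---and in fact more complete---route than the paper's. The paper computes the full mixed partial of a single layer (where the pre-activation is affine in the inputs, so only $\sigma_n^{(n)}\cdot\prod_i w_{ij}$ appears) and then appeals to Lemma~\ref{lemma:multilayer} to pass to the multi-layer network; but that lemma, as stated and proved, only factorises a single first-order partial $\partial_{x_i}$ of the composition, and the mixed partial of a composition is not the product of the layers' mixed partials, so the paper's final step is a leap at exactly the point you identify. Your argument closes that gap: by strengthening the induction hypothesis to all non-empty subsets $S$ of input variables and propagating it through each layer via the multivariate Fa\`a di Bruno formula (whose terms involve $\sigma_n^{(m)}$ for every $1\le m\le |S|\le n$, not just $m=n$, which is why positivity of all lower-order derivatives is genuinely needed), you obtain non-negativity of the top-order mixed partial of the whole network rigorously. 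You also prove, rather than assert, the sign condition $\sigma_n^{(m)}\ge 0$ for $1\le m\le n$, and your point about integration constants is apt: \autoref{eq:sigma_n} fixes $\sigma_n$ only up to polynomial terms, and positivity of the intermediate derivatives requires normalising each antiderivative to vanish as $x\to-\infty$ (a bare symbolic antiderivative need not satisfy this for larger $n$), a choice the paper leaves implicit when it claims $\sigma_n^{(k)}>0$ for all $k\in[1..n]$. What the paper's route buys is brevity and an explicit closed form for the single-layer mixed partial; what yours buys is an argument that actually covers the composed network, at the cost of the Fa\`a di Bruno bookkeeping.
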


\begin{proof}
    Consider a single layer of $F_\theta$, denoted $\psi$, given by \autoref{eq:psi_custom}. Furthermore, without loss of generality, let us consider a single dimension of the output $j$, denoted $\psi_j$.

    If we take the partial derivative with respect to input $x_p$, we can apply the chain rule to obtain:

    \begin{align}
        \frac{\partial}{\partial x_p} \psi_j(\vec{x}) &= \frac{\partial}{\partial x_p} \left[ \sigma_n \left( \sum_{i \in [1..n]} w_{ij} \cdot x_i \right) \right] \nonumber \\
         &= \frac{\partial \sigma_n}{\partial x_p} \left( \sum_{i \in [1..n]} w_{ij} \cdot x_i \right) \cdot \frac{\partial}{\partial x_p} \left[ \sum_{i \in [1..n]} w_{ij} \cdot x_i \right] \nonumber \\
         &= \dot{\sigma}_n \left( \sum_{i \in [1..n]} w_{ij} \cdot x_i \right) \cdot w_{pj}
    \end{align}
    
    If we then take the partial derivative with respect to input $x_q$ to form the mixed partial $\frac{\partial^2}{\partial x_p \partial x_q}$, we get:

    \begin{equation}
        \frac{\partial}{\partial x_q} \left(\frac{\partial}{\partial x_p} \psi_j(\vec{x}) \right) = \ddot{\sigma}_n \left( \sum_{i \in [1..n]} w_{ij} \cdot x_i \right) \cdot w_{pj} \cdot w_{qj}
    \end{equation}
    
    Finally, if we take the derivative with respect to 
    \textit{all} inputs, the result is:

    \begin{equation}
        \frac{\partial^n}{\partial x_1 \partial x_2 \ldots \partial x_n} \psi_j(\vec{x}) = \sigma^{(n)}_n \left( \sum_{i \in [1..n]} w_{ij} \cdot x_i \right) \cdot \prod_{i \in [1..n]} w_{ij}
    \end{equation}
    
    We know that our weights are all positive $w_{ij} \geq 0$, and our activation function is parametrised such that it is positive up to the $n$-th derivative (see \autoref{eq:sigma_n}): $\forall k \in [1..n],\forall x \in \mathbb{R}: \sigma^{(k)}_n(x) > 0$. Consequently, the mixed derivative of $\psi_j$ must be positive: $\frac{\partial^n}{\partial x_1 \partial x_2 \ldots \partial x_n} \psi_j(X) \geq 0$.
    
    Furthermore, by applying Lemma \ref{lemma:multilayer}, we can show that since the mixed partial of each layer is positive, the mixed partial of the entire network is also positive:
    
    $$\frac{\partial^n}{\partial x_1 \partial x_2 \ldots \partial x_n} F_\theta(\vec{x}) \geq 0$$
\end{proof}

\end{document}